
\typeout{IJCAI--PRICAI--20 Instructions for Authors}


\documentclass{article}
\pdfpagewidth=8.5in
\pdfpageheight=11in
\usepackage{ijcai20}

\usepackage{times}
\usepackage{soul}
\usepackage{url}
\usepackage[utf8]{inputenc}
\usepackage[small]{caption}
\usepackage{graphicx}
\usepackage{amsmath}
\usepackage{amsthm}
\usepackage{booktabs}
\usepackage{enumitem}

\usepackage{multirow}
\usepackage[ruled,vlined]{algorithm2e} 

\urlstyle{same}



\newtheorem{theorem}{Theorem}




\title{Federated Extra-Trees with Privacy Preserving}


\author{
Yang Liu$^{1,2}$
\and
Mingxin Chen$^{3}$
\and
Wenxi Zhang$^{1,2}$\and
Junbo Zhang$^{1,2,3}$ \and 
Yu Zheng$^{1,2,3}$
\affiliations
$^1$JD Intelligent Cities Research, JD Digits, Beijing, China\\
$^2$JD Intelligent Cities Business Unit, JD Digits, Beijing, China\\
$^3$Institute of Artificial Intelligence, Southwest Jiaotong University, Chengdu, China
\emails
\{liuyang21cn, msjunbozhang, msyuzheng\}@outlook.com, \{mxchen1997, zhangwenxi7\}@gmail.com
}

\begin{document}

\maketitle

\begin{abstract}
It is commonly observed that the data are scattered everywhere and difficult to be centralized. The data privacy and security also become a sensitive topic. The laws and regulations such as the European Union's General Data Protection Regulation (GDPR) are designed to protect the public's data privacy. However, machine learning requires a large amount of data for better performance, and the current circumstances put deploying real-life AI applications in an extremely difficult situation. To tackle these challenges, in this paper we propose a novel privacy-preserving federated machine learning model, named \textit{Federated Extra-Trees}, which applies local differential privacy in the federated trees model. A secure multi-institutional machine learning system was developed to provide superior performance by processing the modeling jointly on different clients without exchanging any raw data. We have validated the accuracy of our work by conducting extensive experiments on public datasets and the efficiency and robustness were also verified by simulating the real-world scenarios. Overall, we presented an extensible, scalable and practical solution to handle the \textit{data island} problem.
\end{abstract}

\section{Introduction\label{sec:intro}}
Although we are living in the era of \textit{Big Data}, we often have to face the fact that there are not enough data for modeling.
Except for those data-rich companies, most organizations don't own enough data to serve their academic research or business projects, and the necessary data are scattered across different organizations not shared. 
Because of the serious \textit{data island} situations, secure multi-institutional collaborative modeling has many important potential applications, such as medical study, target marketing, risk management, etc. 
In the work of \cite{Sheller_2019}, the researchers built a semantic segmentation model on multimodal brain scans. 
The entire modeling was conducted on a multi-institutional collaboration and no raw patient data were shared.

However, it is still challenging to unite multiple institutions modeling together. 
One of the biggest concerns is data privacy and protection. 
Not long ago, the Federal Trade Commission (FTC) of the United States imposed a record-breaking \textdollar 5 billion penalty to Facebook, due to its violation of an FTC's 2012 order about user data privacy. 
Companies in many other areas also face similar legal sanctions. The enactment of laws and regulations such as the European Union's General Data Protection Regulation (GDPR)
has made the cross-institutional data mining and modeling more difficult.


To meet the regulation requirements and protect data privacy, Google proposed the federated machine learning (FML) \cite{mcmahan2016communication,konevcny2016federated,OptimizationKone2016Federated}. 
The key concept of their work is to train models without integrating the data together in one place, and no raw data would be exposed to other parties but fully secured and under users' own control. 
Different from Google, we are interested in the business situations that several similar and small size companies such as regional banks want to build joint models together to solve a common business problem, e.g. intelligent loan application approval. 
Inspired by this, we proposed a novel privacy-preserving federated machine learning model, entitled \textit{Federated Extra-Trees (FET)}. 
Based on it, a secure multi-institutional machine learning system was developed to support real-world applications accurately, robustly and safely. We have four major contributions:

\begin{itemize}[leftmargin=*]
\item \textbf{Data privacy was secured} by embedding local differential privacy (LDP) into the Federated Extra-Trees, as well as establishing a third-party trusty server to coordinate and monitor the entire modeling process. And the mathematical proof is provided to illustrate that our model satisfies local differential privacy. 

\item \textbf{Accuracy} was guaranteed under the horizontal federated scenarios. Although LDP and randomness were introduced in several stages, our model was proved to maintain the same level of accuracy as the non-federated approach that brings the data into one place.

\item \textbf{High efficiency} was achieved with the random tree building process and our model is robust to the complicated network environments. Only necessary and privacy-free modeling information was exchanged and the message size was reduced to a minimum.

\item The total solution is \textbf{practical, extensible, scalable and explainable} to handle the \textit{data island} problem and can be easily deployed for real-life applications. 

\end{itemize}
\section{Related Work and Preliminaries\label{sec:pre}}
\subsection{Federated Learning}
In the work of \cite{yang2019federated}, they have provided a clear definition for the federated machine learning and how it distinguishes from other subjects, such as distributed machine learning, secure multi-party computation, etc. Generally, it can be categorized into three types, horizontal federated learning, vertical federated learning and federated transfer learning. The horizontal FML \cite{mcmahan2016communication,konevcny2016federated,OptimizationKone2016Federated,chen2018federated,yao2018differential} is focused on solving problems with data from different sample space but same feature space. The vertical FML \cite{hardy2017private,cheng2019secureboost,liu2019federated} is the opposite, which works on problems with the same sample space but different feature space. The federated transfer learning \cite{liu2018secure} is mainly about tasks that data from different sources are overlapped in both sample and feature space, but still largely different from each other. Currently, most FML methods were developed to solve problems under horizontal scenarios. Google applied FML in applications such as on-device item ranking and next word prediction \cite{Bonawitz2019}. In the work of \cite{smith2017federated} the researchers applied FML to solve multi-task problems and a novel federated recommender system was proposed in the work of \cite{chen2018federated}. 

\subsection{Differential Privacy}

Differential Privacy (DP) \cite{dwork2008differential} is a commonly applied privacy-preserving method in federated learning. 
It aims to minimize the possibility of individual identification to ensure user-level privacy \cite{Kairouz2014NIPS}. 
DP has been used extensively in machine learning tasks against privacy inference attacks. 
Existing work mainly focuses on adding perturbations on parameters in the gradient descent algorithms \cite{song2013stochastic,Abadi2016,Geyer2017}. 
Global differential privacy (GDP) and local differential privacy (LDP) are the two main classes of DP. 
The existing approaches in federated learning are mostly GDP, where a trusted curator will apply calibrated noise on the aggregated data to provide differential privacy. 
Conversely, LDP mechanisms, where owners will perturb their data before aggregation, provide better privacy without trusting any third party as a curator. 
LDP applications is on the rise due to its higher privacy and simpler implementation \cite{bhowmick2018protection,AAAIW1816631,chamikara2019local}. 
Formally, LDP is defined as follows \cite{duchi2013local}:
\newtheorem{definition}{Definition}
\begin{definition}[$\varepsilon$-local differential privacy] A randomized algorithm $\mathcal{M}$ is $\varepsilon$-local differential privacy if and only if for any two input tuples $u$ and $u'$ in the domain of $\mathcal{M}$, and for any output $u^{*}$ of $\mathcal{M}$, $Pr[\mathcal{M}(u)=u^{*}] \leq e^{\varepsilon} Pr[\mathcal{M}(u')=u^{*}]$.
\end{definition}
For a complex randomized algorithm with multiple sub-functions, two composition theorems \cite{mcsherry2009privacy} were widely used.
\begin{theorem}[Sequential Composition] If a series of algorithms $\mathcal{M} = \{\mathcal{M}_1,...,\mathcal{M}_P \}$, in which $\mathcal{M}_p$ satisfies ${\varepsilon}_p$-local differential privacy, are sequentially on a dataset, $\mathcal{M}$ will satisfies  $\sum_{p=1}^{P}{\varepsilon}_p$-local differential privacy.
\end{theorem}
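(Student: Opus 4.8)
The plan is to reduce the composed mechanism to a product of its constituents and then bound the joint output probability one factor at a time. The combined algorithm $\mathcal{M}$ produces a tuple of outputs $u^{*} = (u^{*}_1, \ldots, u^{*}_P)$, where $u^{*}_p$ denotes the output of $\mathcal{M}_p$. The central observation is that, for any fixed input $u$, the joint probability of observing a particular output tuple factorizes:
\begin{equation}
Pr[\mathcal{M}(u) = u^{*}] = \prod_{p=1}^{P} Pr[\mathcal{M}_p(u) = u^{*}_p].
\end{equation}
If the mechanisms are applied adaptively, so that $\mathcal{M}_p$ may also depend on the earlier outputs $u^{*}_1, \ldots, u^{*}_{p-1}$, the same identity holds with each factor replaced by the corresponding conditional probability; the argument below is unaffected.

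First I would fix two arbitrary input tuples $u$ and $u'$ in the domain of $\mathcal{M}$, together with an arbitrary output tuple $u^{*}$. Forming the privacy ratio and substituting the factorization gives
\begin{equation}
\frac{Pr[\mathcal{M}(u) = u^{*}]}{Pr[\mathcal{M}(u') = u^{*}]} = \prod_{p=1}^{P} \frac{Pr[\mathcal{M}_p(u) = u^{*}_p]}{Pr[\mathcal{M}_p(u') = u^{*}_p]}.
\end{equation}
Next, since each $\mathcal{M}_p$ satisfies $\varepsilon_p$-local differential privacy, Definition~1 bounds every factor on the right by $e^{\varepsilon_p}$. Multiplying these per-mechanism bounds and using additivity of exponents yields
\begin{equation}
\frac{Pr[\mathcal{M}(u) = u^{*}]}{Pr[\mathcal{M}(u') = u^{*}]} \leq \prod_{p=1}^{P} e^{\varepsilon_p} = e^{\sum_{p=1}^{P} \varepsilon_p},
\end{equation}
so that $Pr[\mathcal{M}(u) = u^{*}] \leq e^{\sum_{p=1}^{P} \varepsilon_p}\, Pr[\mathcal{M}(u') = u^{*}]$, which is exactly the $\sum_{p=1}^{P}\varepsilon_p$-local differential privacy guarantee demanded by Definition~1.

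The routine algebra---the factorization and the telescoping of exponents---is straightforward; the part that deserves care is justifying the product form of the joint distribution. The main obstacle is the adaptive case: I must verify that the $\varepsilon_p$-LDP guarantee for $\mathcal{M}_p$ holds uniformly over all realizations of the preceding outputs, so that the bound $e^{\varepsilon_p}$ can be pulled out of each conditional factor independently of what came before. A secondary technical point is that Definition~1 is phrased for discrete outputs via $Pr[\mathcal{M}(u)=u^{*}]$; if some $\mathcal{M}_p$ has a continuous output space, the probabilities should be read as densities (or the singleton $\{u^{*}\}$ replaced by a measurable set and the ratio bound applied to the induced measures), but neither reinterpretation disturbs the factor-by-factor argument.
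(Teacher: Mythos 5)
Your proof is correct, and it is the standard argument: factorize the joint output probability of the composed mechanism, bound each factor by $e^{\varepsilon_p}$ via Definition~1, and telescope the exponents. Note that the paper itself does not prove this theorem at all --- it is imported by citation from McSherry (2009) --- so there is no in-paper proof to diverge from; the closest analog is the proof of Corollary~2, which performs exactly the same manipulation you do, writing $Pr[\mathcal{M}(t)=t^{*}] = \prod_{p=1}^{P} Pr[\mathcal{M}_p(t)=t^{*}]$ and then pulling an $e^{\varepsilon}$ factor out of each term, so your argument is fully consistent with how the paper uses the theorem. Your two caveats are well placed and go slightly beyond what the paper needs: the adaptive case (requiring the per-mechanism LDP bound to hold uniformly over all realizations of earlier outputs, so the chain-rule conditional factors can each be bounded) is the only genuinely delicate point, and for the paper's application it is moot, since the $P$ trees in FET are built with independent randomness; the density/measurable-set reading of Definition~1 for continuous outputs is likewise a harmless technicality. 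One stylistic remark: in your factorization it would be cleaner to state explicitly that the independence (or chain-rule) decomposition is over the mechanisms' internal coin flips, since all $\mathcal{M}_p$ act on the \emph{same} input $u$ --- that is precisely what distinguishes sequential from parallel composition here.
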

\begin{theorem}[Parallel Composition] If a series of algorithms $\mathcal{M} = \{\mathcal{M}_1,...,\mathcal{M}_P \}$, in which $\mathcal{M}_p$ satisfies ${\varepsilon}_p$-local differential privacy, are performed separately on disjoint datasets, $\mathcal{M}$ will satisfies $\underset{1\leq p \leq P}{max} \{\varepsilon_p\}$-local differential privacy.
\end{theorem}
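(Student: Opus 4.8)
The plan is to model the composite mechanism $\mathcal{M}$ as the procedure that partitions its input across the $P$ disjoint datasets, runs each $\mathcal{M}_p$ independently on its own block, and returns the concatenation of the $P$ outputs. Writing an input as $u = (u_1, \ldots, u_P)$ and an output as $u^{*} = (u_1^{*}, \ldots, u_P^{*})$ according to this partition, I would first establish the structural fact the whole argument rests on: because the sub-algorithms act on separate data and are run independently, the probability of any joint output factors as a product,
\begin{equation}
Pr[\mathcal{M}(u)=u^{*}] = \prod_{p=1}^{P} Pr[\mathcal{M}_p(u_p)=u_p^{*}].
\end{equation}

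Next I would exploit disjointness. Since the $P$ datasets are disjoint, any single record on which the LDP guarantee is measured lies in exactly one block of the partition, so the two input tuples $u$ and $u'$ compared in Definition~1 agree on every coordinate except one, say index $p_0$, giving $u_p = u_p'$ for all $p \neq p_0$. Forming the ratio of the two joint probabilities and cancelling the matching factors then collapses the product to a single surviving term:
\begin{equation}
\frac{Pr[\mathcal{M}(u)=u^{*}]}{Pr[\mathcal{M}(u')=u^{*}]} = \frac{Pr[\mathcal{M}_{p_0}(u_{p_0})=u_{p_0}^{*}]}{Pr[\mathcal{M}_{p_0}(u_{p_0}')=u_{p_0}^{*}]} \leq e^{\varepsilon_{p_0}} \leq e^{\max_{1 \leq p \leq P} \varepsilon_p},
\end{equation}
where the first inequality is the $\varepsilon_{p_0}$-LDP guarantee of $\mathcal{M}_{p_0}$ and the second merely bounds one $\varepsilon_{p_0}$ by the maximum. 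As $u$, $u'$, and $u^{*}$ were arbitrary, this is exactly the $\max_p \varepsilon_p$-LDP condition.

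The main obstacle is not the algebra but pinning down the disjointness assumption precisely enough that the two compared inputs differ in only one block. This is the single point that separates parallel from sequential composition: were $u$ and $u'$ allowed to differ in every block, none of the factors would cancel and the bound would degrade to $e^{\sum_p \varepsilon_p}$, recovering Theorem~1 instead. I would therefore make explicit that the neighboring relation underlying the LDP definition respects the partition, so that a change in one individual's contribution can touch at most one $\mathcal{M}_{p_0}$ while leaving the remaining $P-1$ sub-mechanisms' output distributions identical. Once that is stated, the factorization, the cancellation, and the final $\max$ bound follow immediately.
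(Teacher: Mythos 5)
Your proof is correct, but there is no in-paper proof to compare it against: the paper states this theorem as an imported known result, citing McSherry's PINQ paper, and never proves it. Your argument is the standard one from that literature --- factor the joint output probability using the independence of the sub-mechanisms, use disjointness to force the two compared inputs to agree on all but one block, cancel the $P-1$ identical factors, and bound the single survivor by $e^{\max_{1\leq p\leq P}\varepsilon_p}$. Worth crediting is that you explicitly confronted the one genuinely delicate point. The paper's Definition 1 quantifies over \emph{arbitrary} pairs of input tuples $u,u'$, and under a literal reading of that definition applied to the concatenated mechanism the theorem as stated is false: if $u$ and $u'$ may differ in every block, nothing cancels and only the sequential bound $\sum_{p=1}^{P}\varepsilon_p$ survives, exactly as you observed. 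The statement is true only when the privacy unit is a single record, which by disjointness lies in exactly one block of the partition; you made that restriction explicit, which is more careful than the paper itself --- note that its later Corollary 1 implicitly relies on the same per-record reading when it invokes parallel composition across the disjoint datasets of sibling tree nodes. In short: a correct proof by the standard route, filling in a step the paper delegates entirely to a citation.
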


\begin{figure}[!t]
    \centering
\includegraphics[width=\linewidth]{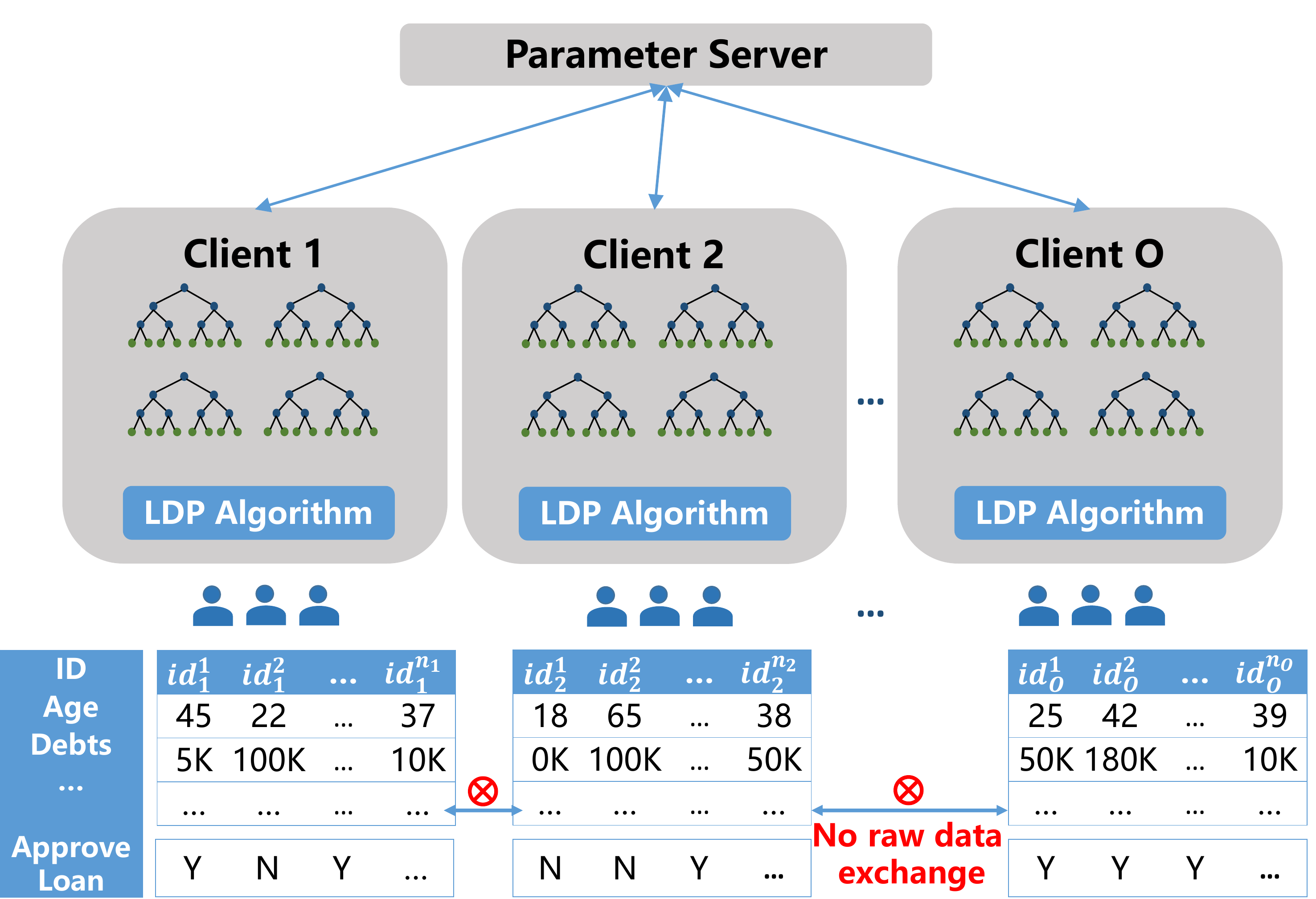}
    \caption{Framework of the Federated Extra-Trees}
    \label{fig:framework}
    \vspace{-.2cm}
\end{figure}

\section{Methodology\label{method}}
\subsection{Learning Scenario}
In our work, we focused on applying Federated Extremely Randomized Trees, abbreviated to Federated Extra-Trees, to solve horizontal distributed data problems, that all data providers have the same attribute set $\mathcal{F}$ but different sample space. Each data provider was considered as one institutional data domain and denoted as $\mathcal{D}_{i}$. The overall data domain is $\mathcal{D} = \{\mathcal{D}_{1} ; \mathcal{D}_{2}; \cdots ; \mathcal{D}_{O}\}$, where $1\leq i \leq O$ and $O$ is the number of institutional domains. On each data domain, we have $\mathcal{D}_{i} = \left(\left(x_i^1, y_i^1\right), \left(x_i^2, y_i^2\right), ..., \left(x_i^{n_i}, y_i^{n_i}\right)\right)$. Here $x$ is the input sample and $y$ is the corresponding label, $(x,y)\in (\mathcal{X},\mathcal{Y})$ and $n_i$ is the total number of samples in $\mathcal{D}_{i}$. We have deployed a master machine as the parameter server to coordinate the entire modeling process and assigned each institutional domain one client machine. Since we are trying to build FML models jointly on different organizations, $O$ is usually small and in our work, we only consider situations when $O <= 10$. For more parties involved, the algorithm design could be much more different. 


\subsection{Problem Statement}
The formal statement of the problem is given as below:
\begin{itemize}[leftmargin=*]
\item \textbf{Given:} Institutional data domain on each client.
\item \textbf{Learn:} Privacy-preserved Federated Extra-Trees.
\item \textbf{Constraint:} The performance (accuracy, f1-score, etc) of the Federated Extra-Trees must be comparable to the non-federated approach.
\end{itemize}

\subsection{Framework Overview}

In our work, we carefully extended the Extra-Trees \cite{Geurts2006ExtremelyRT} to suit the horizontal federated scenarios with full consideration of privacy issues by applying local differential privacy (LDP) \cite{duchi2013local}. In Extra-Trees, the optimal splitting under a certain feature is randomly selected instead of being calculated. With the idea of bagging \cite{breiman1996bagging}, a forest can accommodate the errors caused by randomness in the single trees. Our model has provided a concise algorithm design and the computation complexity is limited to the minimum, and the training speed is greatly improved. Our model is able to solve classification problems and without applying LDP it also supports regression tasks.

As shown in Figure \ref{fig:framework}, assume we want to build an intelligent system to automatically decide if we should approve or reject the loan application. We have $O$ clients and each of them provides information on their own loan application records. Before the modeling, we first applied LDP to transform the clients’ labels into encoded binary strings, then all clients work together to build a complete federated forest that is available for subsequent use on every client. During the training, no raw data such as Gender, Age or others would be exposed. When modeling is finished, the model will be saved locally for inference use and no communication is necessary.

\subsection{Algorithms}

\begin{algorithm}[!b]
\SetKwInOut{Input}{Input}\SetKwInOut{Output}{Output}
\SetKwFunction{Tree}{build\_tree}
\Input{Training set $\mathcal{D}_i$ of client $i$,\ feature set $\mathcal{F}$ }
\Output{A Federated Extra-Tree}
\caption{Federated Extra-Tree -- Client \label{alg1}}
\SetKwProg{TreeGenerateAlg}{Function}{}{end}
\setlength{\baselineskip}{1.5em}

    {$\mathcal{S}_{i}$ $\leftarrow$ {subsample of $\mathcal{D}_i $ on client $i$};}
    
        \TreeGenerateAlg{\Tree{$\mathcal{S}_{i},\mathcal{F}$}}
        {
            \If{ $stopping\_condition$ \rm \textbf{is true}}
            {
                \rm{Send $Sum_{i}$ to master;}\\
                \rm{Receive \textit{leaf labels} from master;}\\
                \Return{leaf node;}
            }
            \rm Receive feature candidate set $\mathcal{F^{*}}$ from master; \\
            \For{\rm{\textbf{each feature $f_{i,j}\in \mathcal{F^{*}}$}}}
            {   ${v}^{min}_{i,j},{v}^{max}_{i,j}\leftarrow $ \rm{local min \& max value of $f_{i,j}$;}\\
            Random pick $v_{i,j}^* \in ({v}^{min}_{i,j},{v}^{max}_{i,j})$ and send to master;\\
                \rm Receive split threshold $v_{j}^{*}$ from master; \\
             $\mathcal{S}_{i_{L},j},\mathcal{S}_{i_{R},j}$ $\leftarrow$ Split $\mathcal{S}_{i}$ by $v_{j}^{*}$ of feature $f_j$;\\
                $Sum_{i_{L},j}, Sum_{i_{R},j}\leftarrow$label aggregation;\\
                Send $Sum_{i_{L},j}, Sum_{i_{R},j}$ to master;\\
                }
            Receive global best split feature $f^*$ and value $v^*$;\\
            $\mathcal{S}_{i_{L},j_*},\mathcal{S}_{i_{R},j_*}$ $\leftarrow$ Split $\mathcal{S}_{i}$ by $v^*$ of feature $f^*$;\\
            left\_subtree $\leftarrow$ \Tree{$\mathcal{S}_{i_{L},j_*},\mathcal{F}$};\\
            right\_subtree $\leftarrow$ \Tree{$\mathcal{S}_{i_{R},j_*},\mathcal{F}$};\\
            \Return{tree node}
        }
    Append current tree to forest;\\
\end{algorithm}

%
In this part, we will give a detailed introduction to our model. The training process of clients and master is described in Algorithm \ref{alg1} and \ref{alg2}. All participants, including the master and clients, share the same feature set. 
The key steps of building a tree are as follows.

\paragraph{Stopping criterion.} Before creating a new tree node, participants will check if the stop conditions have been satisfied. Here we adopted a CART-tree \cite{breiman2017classification} like design. The stopping conditions are set by a maximum threshold for the depth of trees, a limit on the number of remaining samples in leaf nodes as well as other corner conditions. 

\begin{algorithm}[!t]
\SetKwInOut{Input}{Input}\SetKwInOut{Output}{Output}
\SetKwFunction{Tree}{build\_tree}
\SetKwFunction{Split}{pick\_random\_split}
\Input{Feature set $\mathcal{F}$}
\Output{A Federated Extra-Tree}
\caption{Federated Extra-Tree -- Master\label{alg2}}
\SetKwProg{TreeGenerateAlg}{Function}{}{end}
\setlength{\baselineskip}{1.5em}

        \TreeGenerateAlg{\Tree{$\mathcal{F}$}}
        {
            \If{ $stopping\_condition$ \rm \textbf{is true}}
            {
                \rm{Receive $Sum_{i}$ from client $i = 1,...,O$};\\
                \rm{Send global $Sum$ to clients as \textit{leaf labels}};\\
                \Return{leaf node};
            }
            $\mathcal{F^{*}}\subset \mathcal{F}\leftarrow$ Randomly-chosen feature subset;\\
            \rm Send feature candidate set $\mathcal{F^{*}}$ to clients; \\
            \For{\rm{\textbf{each feature $f_{j}\in \mathcal{F^{*}}$}}}
            {   \rm{Gather ${v}^{*}_{i,j}$ from client $i = 1,...,O$};\\
            Pick a random split threshold $v_{j}^{*}\in \left(min({v}^{*}_{i,j}), max({v}^{*}_{i,j})\right)$;\\
            Broadcast $v_{j}^{*}$ to all clients;\\
            Gather $Sum_{i_{L},j}, Sum_{i_{R},j},\ i=1,...,O$;\\
            $\mathcal{C}_{L,j},\mathcal{C}_{R,j}\leftarrow $ estimated global label counts;\\
            Calculate $Gini\_Gain(f_j)$ with ($\mathcal{C}_{L,j},\mathcal{C}_{R,j}$);\\}
            $f^* = \mathop{\arg\max}\limits_{f_j}Gini\_Gain(\mathcal{F}^*)$ ;\\
            Broadcast the global best split feature $f^*$ and the corresponding split threshold $v^*$;\\
            left\_subtree $\leftarrow$ \Tree{$\mathcal{F}$};\\
            right\_subtree $\leftarrow$ \Tree{$\mathcal{F}$};\\
            \Return{tree node}
        }
    Append current tree to forest;\\

\end{algorithm}

\paragraph{Random feature and threshold selection.} Master is responsible for coordinating the collection of information from clients and decide which feature to use on a node. We inherited the randomness solution in Extra-Trees and extended it to the entire process of feature selection. Experiments in Section \ref{experiment} have shown that the randomization does not necessarily lead to loss of precision. To create a new tree node, the master would randomly extract a candidate feature set $\mathcal{F^{*}}\subset \mathcal{F}$, and send it to all clients. Each client $i$ randomly picks a value $v_{i,j}^{*}$ between the local minimum and maximum value of feature $j$, then send it to the master. The master collects $v_{i,j}^{*}$ and arbitrarily picks a value $v_{j}^{*}$ between $min(v_{i,j}^{*})$ and $max(v_{i,j}^{*})$ as the split threshold for each feature, then broadcasts the values to all clients. In this way, the true local range of features on each client will not be revealed to the master.

Clients would split local data temporarily into left and right subtrees according to the received feature threshold, then sending perturbed information of the data labels to master. This process also increases the randomness. Receiving all the information of local subsets, the master would aggregate the data to calculate a $Gini\_Gain$ value for the feature. Feature $f^*$ with the maximum $Gini\_Gain$ will be chosen as the best split feature for the current node. Clients should record the subsets for each feature. When the split feature $f^*$ is finally determined, they could use the corresponding subsets directly to avoid repeated calculations.

\begin{figure*}[!t]
    \centering
    \includegraphics[width=.95\linewidth]{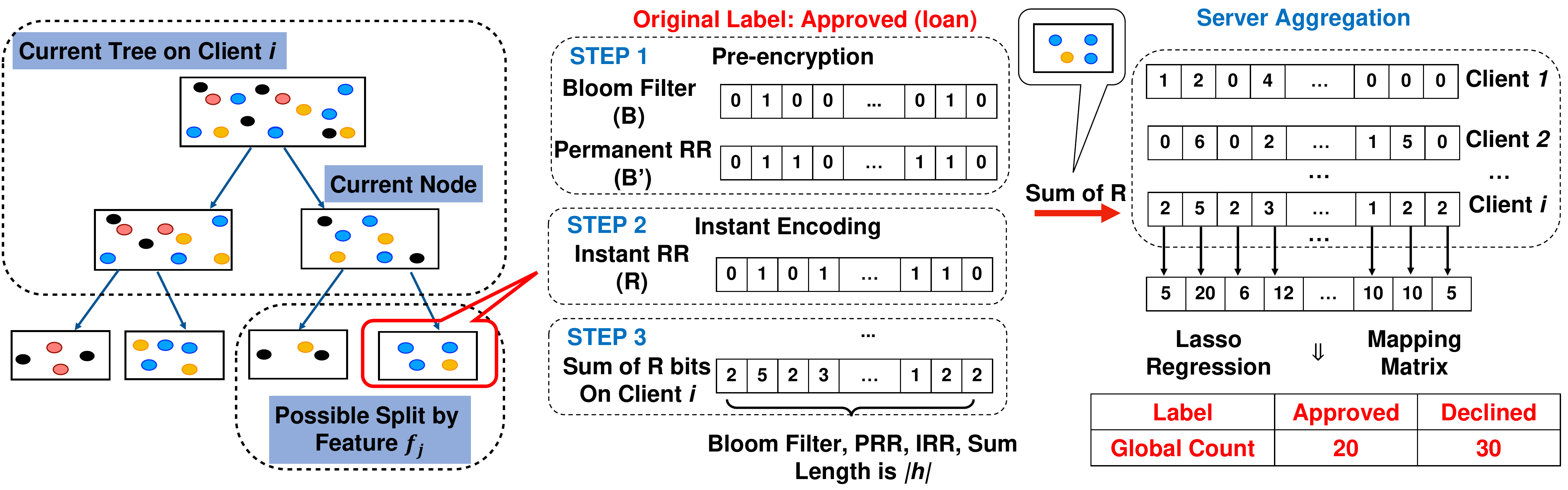}
    \caption{Privacy-Preserving Methodology in Federated Extra-Trees}
    \label{fig:random_response}
\end{figure*}

\subsection{Privacy Preserving Methods}
In Figure \ref{fig:random_response}, the dots with different colors represent different label classes. 
When the building process proceeds to a new node, the master 
needs to know the global label distribution of split data under a feature threshold, so that it can calculate the $Gini\_Gain$ value for the feature.

The algorithm should neither reveal the category of a single user nor compromise the specific distribution of categories on a client. Here we modify an aggregation algorithm, which was first proposed by Google \cite{Erlingsson_2014} for crowd-sourcing business and proven to be locally differential private. We implement a multi-layer mechanism, including one Bloom Filter layer and two separate random-response based layers. Bloom Filter \cite{broder2004network} is a randomized structure for representing a set in a space-efficient way. It adds extra uncertainty for user identification and compacts large data to reduce the communication traffic in federated scenarios.

\textbf{Step 1: }Two fixed layers are set before the tree is created. 
For the $k$-th sample in $\mathcal{D}_i$, its label $y_i^k$ maps to Bloom Filter $B^k_i$ of size $h$ using several hash factions.
The Bloom Filter strings are then encrypted as permanent random responses (Permanent RR), i.e., the second layer. Each bit in $B_i^k$ would maintain the original value with probability $pr$; otherwise it will be replaced by 0 or 1 with equal probability $1/2(1-pr)$. 

\textbf{Step 2: }For each feature selection process, another layer of temporary perturbation shall be added on $B^{k'}_i$, i.e., an instant random response string (Instant RR) denoted as $R_k$. Each bit $R_{i,t}^k$ is set to 1 with a certain probability, as is shown in Equation \ref{Pst}.
\begin{equation}
Pr(R_{i,t}^k=1)=\left\{
             \begin{array}{lr}
             \xi, & if\ {B_{i,t}^{k'}}=1 \\
             \zeta, & if\ {B_{i,t}^{k'}}=0 
             \end{array}
,t=1,2,\dots,h    
\right.
\label{Pst}
\end{equation}

\textbf{Step 3: }
Client $i$ adds the local values along the bit position, as shown in Equation \ref{Rst}.
\begin{equation}
    Sum_{i,t} = \sum_{k=1}^{n_i}R_{i,t}^k,t=1,2,\dots,h 
\end{equation}
\label{Rst}
where $n_i$ is the number of users on client $i$.

\textbf{Aggregation: }As is shown in Figure \ref{fig:random_response}, the master will aggregate the received results into $Sum$, with the count of each bit being:
\begin{equation}
Sum_t = \sum_{i=1}^{M}Sum_{i,t},t=1,\dots,h
\end{equation}
With the label space mapped into $B_1,B_2,\dots,B_L$, master estimates the overall label counts using linear estimation methods such as Lasso Regression. 

In this scheme, the Permanent RR is already fixed, and the instant perturbation is calculated at an individual level without trusting any third party as a curator. 
This LDP method also applies to other models that use statistics as an intermediate value.
For comparisons, we also adopt GDP in Federated Extra-Trees, i.e., the clients add a disturbance to their local labeling statistics and the master sums up the received statistics directly for further calculation. In this case, the clients must be fully trusted to be responsible for ensuring the data privacy of end-users. 
In the experimental part, we carry out a GDP-based method using the Laplace mechanism.

\subsection{Privacy Analysis}
In this part, we will provide an analysis of the privacy level of our proposed algorithm.
\newtheorem{corollary}{Corollary}
\begin{corollary}
The output of $p$-th tree on $i$-th client satisfies $\varepsilon_{ip}$-local differential privacy.
\end{corollary}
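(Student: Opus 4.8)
The plan is to view the construction of the $p$-th tree, restricted to a single record on client $i$, as a composite randomized mechanism $\mathcal{M}$ acting on that record's label, and to bound its privacy by peeling off the two randomization layers introduced in Steps~1--2. Concretely, a label $y_i^k$ is first mapped deterministically to the Bloom filter $B_i^k$ (which sets at most $\kappa$ of the $h$ bits, where $\kappa$ is the number of hash functions), then perturbed once by the permanent random response into $B_i^{k'}$, and finally perturbed by a fresh instant random response at every feature-selection step encountered while growing the tree. The object whose privacy I must control is therefore the joint output $(B_i^{k'}, R^{(1)},\dots,R^{(d_{ip})})$, where $d_{ip}$ is the number of such steps, a quantity that depends on both the client's data and the random structure of tree $p$ --- which is exactly why the resulting budget carries the indices $i$ and $p$.

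First I would establish the per-layer guarantees. For the permanent layer I would read the single-bit likelihood ratio directly off Step~1: a maintained-or-resampled bit satisfies $Pr[B_{i,t}^{k'}=b\mid B_{i,t}^{k}=1]/Pr[B_{i,t}^{k'}=b\mid B_{i,t}^{k}=0]\le (1+pr)/(1-pr)$, and since two distinct labels differ in at most $2\kappa$ Bloom-filter positions, the permanent response alone satisfies $\varepsilon_{\text{perm}}$-LDP with $\varepsilon_{\text{perm}}=2\kappa\ln\frac{1+pr}{1-pr}$ in the sense of Definition~1. For the instant layer I would carry out the analogous computation on Equation~\eqref{Pst}, folding the permanent flip probability $pr$ into the effective per-bit probabilities built from $\xi$ and $\zeta$ to obtain a single-report parameter $\varepsilon_{\text{inst}}$.

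Then I would combine the layers. The direct route is to invoke the Sequential Composition theorem (Theorem~1): the permanent response together with the $d_{ip}$ instant responses form a sequence of LDP mechanisms on the same label, so their composition satisfies $\big(\varepsilon_{\text{perm}}+d_{ip}\,\varepsilon_{\text{inst}}\big)$-LDP, and setting $\varepsilon_{ip}$ to this sum proves the claim. I expect the main obstacle to lie precisely here: the $d_{ip}$ instant responses are \emph{not} independent, since they are all resampled from the one fixed string $B_i^{k'}$, so a naive application of sequential composition over-counts their leakage. The honest way to handle this is to note that, conditioned on $B_i^{k'}$, the instant strings are mutually independent post-processing of that single string; writing $Pr[\mathcal{M}(y)=u^{*}]$ as a mixture over $B_i^{k'}$, the instant-layer factors $\prod_j Pr[R^{(j)}\mid B_i^{k'}]$ are common to numerator and denominator, so bounding the ratio for any two labels reduces termwise to the permanent-layer bound and the whole tree output is in fact capped at $\varepsilon_{\text{perm}}$ regardless of $d_{ip}$. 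I would therefore present the composition sum as the conservative value of $\varepsilon_{ip}$, and use this conditional-independence argument both to verify that the bound is well-defined and, if desired, to tighten $\varepsilon_{ip}$ down to the permanent-layer value.
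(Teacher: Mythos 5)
Your proof is correct but takes a genuinely different route from the paper's. The paper argues structurally over the tree: it takes the node-level aggregation as an $\varepsilon_{node}$-LDP primitive (without ever deriving a value for $\varepsilon_{node}$), invokes parallel composition across sibling nodes at the same depth---each record lies in exactly one node per layer, since child subsets are disjoint---and then sequential composition down the layers plus the leaf query, yielding $\varepsilon_{ip}=\varepsilon_{node}\times(depth+leaf)$, a budget that grows linearly with depth. You instead decompose the randomization pipeline itself, RAPPOR-style: you derive explicit per-layer constants ($\varepsilon_{\mathrm{perm}}=2\kappa\ln\frac{1+pr}{1-pr}$ for the permanent response, an analogous $\varepsilon_{\mathrm{inst}}$ for the instant layer), and rather than trusting sequential composition across the correlated instant reports, you observe that all instant strings are randomized functions of the single memoized string $B_i^{k'}$, so the mixture argument (instant-layer factors cancel termwise in the likelihood ratio) caps the total leakage at $\varepsilon_{\mathrm{perm}}$ independent of $d_{ip}$. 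Your caution about naive sequential composition is well placed: conditioned on earlier reports, the per-report guarantee degrades as the adversary's posterior on $B_i^{k'}$ sharpens, so the step-by-step $\varepsilon_{\mathrm{inst}}$ accounting is not valid on its own, though your final sum still upper-bounds the (capped) loss and is therefore a legitimate conservative budget. What each approach buys: yours supplies exactly what the paper leaves implicit---a concrete node-level budget---and gives a strictly tighter, depth-independent guarantee, at the price of relying on the permanent layer being fixed once in Step~1, restricting attention to label privacy (as the paper also implicitly does), and needing $d_{ip}$ worst-cased (maximum depth times $|\mathcal{F}^{*}|$) so the advertised budget is not data-dependent; the paper's coarser structural argument is agnostic to the particular LDP primitive at each node and makes the disjointness/parallel-composition fact explicit, which it then reuses across clients in Corollary~2.
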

\begin{proof}
After the private data is perturbed by LDP algorithm, every query that acts on the dataset satisfies $\varepsilon_{node}$-local differential privacy. Considering the structure of random decision trees, different nodes on every layer own disjoint datasets, which satisfies parallel composition. Thus, the maximum privacy budget will not be larger than $\varepsilon_{ip}=\varepsilon_{node} *(depth + leaf)$.
\end{proof}

\begin{corollary}
The \emph{FET} satisfies $\varepsilon$-local differential privacy.
\end{corollary}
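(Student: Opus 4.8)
The plan is to build the full-forest guarantee out of the per-tree, per-client guarantee of Corollary 1 by applying the two composition theorems along the two natural axes of the model: the \emph{forest axis} (many trees sharing the same local data) and the \emph{client axis} (many clients holding disjoint data). Concretely, I would treat the $O$ per-client mechanisms as the units to be composed in parallel, and within each client compose the individual trees sequentially.

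First I would fix an arbitrary client $i$ and observe that the forest's trees, say $P$ of them, are all grown on (subsamples of) the \emph{same} local dataset $\mathcal{D}_i$, so a single user's record may influence the output of every tree on that client. The outputs of those $P$ trees are therefore not produced on disjoint data, and Theorem 1 (Sequential Composition) is the appropriate tool: since Corollary 1 gives that the $p$-th tree satisfies $\varepsilon_{ip}$-LDP, the whole collection of trees on client $i$ satisfies $\left(\sum_{p=1}^{P}\varepsilon_{ip}\right)$-LDP. This step deliberately takes the worst case in which every tree touches the user's record, which yields a valid upper bound even when subsampling means a given record appears in only some of the trees.

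Next I would compose across clients. In the horizontal federated setting the institutional domains $\mathcal{D}_1,\dots,\mathcal{D}_O$ occupy different sample spaces and are hence disjoint, so each end-user belongs to exactly one client; this is precisely the hypothesis of Theorem 2 (Parallel Composition). Applying it to the $O$ per-client mechanisms gives that \emph{FET} satisfies $\varepsilon$-LDP with $\varepsilon = \max_{1\le i\le O}\sum_{p=1}^{P}\varepsilon_{ip}$. I would also remark that every operation performed by the master — collecting the perturbed statistics $Sum_{i}$, evaluating $Gini\_Gain(f_j)$, and running the linear (Lasso) estimation of label counts — acts only on already-perturbed client outputs, so by the post-processing invariance of differential privacy it consumes no additional budget and leaves the bound unchanged.

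The main obstacle I expect is not any single calculation but justifying the split between the two composition regimes: one must argue carefully that reuse of a client's records across trees forces sequential (additive) accounting, whereas the genuine disjointness of sample spaces across clients licenses parallel (max) accounting. A secondary subtlety is the permanent random response, which is drawn once and reused across all node-level queries; a fully tight analysis would exploit this reuse to bound the per-client budget strictly below the naive sum, but for the stated guarantee it suffices to treat the sequential sum as a safe upper estimate.
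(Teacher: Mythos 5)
Your proof is correct, but you compose in the opposite order from the paper, and the two routes give different constants. The paper fixes a tree index $p$ first, applies parallel composition across the $O$ disjoint client datasets to get $\max_{1\le i\le O}\{\varepsilon_{ip}\}$ for that tree, and then composes sequentially over the $P$ trees, arriving at $\varepsilon=\sum_{p=1}^{P}\max_{1\le i\le O}\{\varepsilon_{ip}\}$. You instead fix a client $i$ first, sum sequentially over its $P$ trees to get $\sum_{p=1}^{P}\varepsilon_{ip}$, and then take the parallel maximum over clients, arriving at $\varepsilon=\max_{1\le i\le O}\sum_{p=1}^{P}\varepsilon_{ip}$. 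Both decompositions are legitimate applications of Theorems 1 and 2, but yours is at least as tight, since $\max_i\sum_p\varepsilon_{ip}\le\sum_p\max_i\varepsilon_{ip}$ in general; it is also arguably the more natural accounting for an LDP guarantee, because any single user's record lives on exactly one client, so the privacy loss that user actually experiences is precisely the per-client sequential composition, capped by the worst client. Your two additional remarks — that subsampling only makes the sequential sum a safe over-estimate, and that the master's aggregation, $Gini\_Gain$ computation, and Lasso estimation are post-processing of already-perturbed outputs and consume no budget — are both sound and fill small gaps the paper's proof leaves implicit; the paper's product-of-probabilities chain silently assumes the master's output factorizes through the per-tree, per-client mechanisms, which is exactly the post-processing point you make explicit.
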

\begin{proof}
There are two views on $\emph{FET}$, random decision trees' view and participating clients' views. In the previous perspective, $P$ mutually independent random decision trees of one client act on the same data set, which satisfies sequential composition. In the latter perspective, the decision trees of $O$ clients act on $O$ disjoint data sets, which satisfies parallel composition. With these two composition theorems, we have
\begin{align*}
Pr[\mathcal{M}(t)=t^*] & = \prod_{p=1}^{P} Pr[\mathcal{M}_p(t)=t^*]\\
& = \prod_{p=1}^{P} \prod_{i=1}^{O} Pr[\mathcal{M}_{ip}(t)=t^*] \\
& \leq \prod_{p=1}^{P} e^{\underset{1\leq i \leq O}{max} \{\varepsilon_{ip}\}} Pr[\mathcal{M}_{p}(t')=t^*] \\ 
& \leq  e^{\sum_{p=1}^{P}{\underset{1\leq i \leq O}{max} \{\varepsilon_{ip}\}} } Pr[\mathcal{M}(t')=t^*] 
\end{align*}
Therefore, we proved that our proposed \emph{FET} model satisfies $\varepsilon=\sum_{p=1}^{P}{\underset{1\leq i \leq O}{max} \{\varepsilon_{ip}\}}$
-local differential privacy.
\end{proof}




\section{Experimental Studies\label{experiment}}
\subsection{Experimental Setup}
To verify the effectiveness of our algorithm and the utility of privacy-preserving methods, we designed comparative experiments of the following four algorithms:
\begin{itemize}[leftmargin=*]
	\item \textbf{Extra-Trees (ET)}: The non-federated implementation of the extremely randomized trees.
	\item \textbf{Federated Extra-Trees (FET)}: Our federated extremely randomized trees without perturbations on the input data.
	\item \textbf{FET-LDP}: Our Federated Extra-Trees with local differential privacy (random response mechanisms).
	\item \textbf{FET-GDP}: Our Federated Extra-Trees with global differential privacy (Laplace mechanisms).
\end{itemize}

We have carried out tests on various UCI datasets \cite{Dua:2019} and MIMIC dataset \cite{johnson2016mimic} with a different number of samples and attributes for classification tasks. Both numerical and categorical data were considered. The MIMIC dataset was processed by following the work of \cite{huang2018loadaboost}.
\begin{table}
\renewcommand{\arraystretch}{1.1}
\centering
\begin{tabular}{lp{2cm}p{1.5cm}p{1cm}p{1cm}}
\hline
Dataset  & Size & Feature Size & Label Size
\\ \hline
{Spambase}             & 4600 &     57            &     2           
\\\hline
{Credit-card}          & 30000 &        23         &      2 
\\\hline
{MIMIC} & 35120 &       2168          &     2         
\\\hline

{Waveform}             & 5000 &            21     &      3         \\\hline
{Letter-recognition}   & 20000 &          16     &26\\\hline
{KDDCUP 99}           & 4000000 &        42         & 23              \\\hline
\end{tabular}
\caption{Dataset Details}
\label{dataset}
\end{table}
Each dataset was divided into a training set and a test set. The division ratio of the training set and the test set is 8:2, and for KDD Cup 99 dataset with a large amount of data, the ratio is 99:1. The evaluation criteria are the accuracy and F1 score. For multi-classification, the F1 score refers to the Micro F1 value.

\begin{table*}[!t]
\renewcommand{\arraystretch}{1.1}
\centering
\begin{tabular}{cccccc}
\hline
\multicolumn{6}{c}{Binary Classification}                                                                                              \\ \hline
Metric                    & Dataset         & ET           & FET       & FET-LDP      & FET-GDP \\ \hline
\multirow{3}{*}{Accuracy} 
                          & Spambase         &  0.934$\pm$0.005      &  0.932$\pm$0.009  & 0.920$\pm$0.009  &  0.919$\pm$0.013 \\ \cline{2-6}
                          & Credit-Card      &  0.805$\pm$0.002      &  0.814$\pm$0.004  & 0.815$\pm$0.003  &  0.811$\pm$0.012 \\ \cline{2-6}
                          & MIMIC &  0.639$\pm$0.003    &  0.641$\pm$0.011  & 0.645$\pm$0.008  &  0.638$\pm$0.010 \\ \cline{2-6} 
                          \hline
\multirow{3}{*}{F1 Score} 
                          & Spambase         &  0.943$\pm$0.004      &  0.934$\pm$0.021  & 0.920$\pm$0.009  &  0.915$\pm$0.018 \\ \cline{2-6}
                          & Credit-Card      &  0.885$\pm$0.001      &  0.890$\pm$0.002  & 0.852$\pm$0.040  &  0.889$\pm$0.005 \\ \cline{2-6}
                          & MIMIC &  0.776$\pm$0.001    &  0.781$\pm$0.006  & 0.645$\pm$0.008  &  0.776$\pm$0.005 \\ \cline{2-6} 
                          \hline
\multicolumn{6}{c}{Multiclass Classification}                                                                                          \\ \hline
Metric                    & Dataset         & ET           & FET       & FET-LDP      & FET-GDP \\ \hline
\multirow{3}{*}{Accuracy} 
                          & Waveform         &  0.842$\pm$0.005      &  0.886$\pm$0.010  & 0.882$\pm$0.011  &  0.876$\pm$0.012 \\ \cline{2-6}
                          & Letter           &  0.953$\pm$0.002      &  0.971$\pm$0.004  & 0.940$\pm$0.007  &  0.972$\pm$0.003 \\ \cline{2-6}
                          & KDD CUP 99       &  0.994$\pm$0.001      &  0.994$\pm$0.002  & 0.993$\pm$0.002  &  0.993$\pm$0.002 \\ \cline{2-6} 
                          \hline
\multirow{3}{*}{F1 Score} 
                          & Waveform         &  0.842$\pm$0.005      &  0.886$\pm$0.010  & 0.882$\pm$0.011  &  0.876$\pm$0.012 \\ \cline{2-6}
                          & Letter           &  0.953$\pm$0.002      &  0.971$\pm$0.004  & 0.940$\pm$0.007  &  0.972$\pm$0.003 \\ \cline{2-6}
                          & KDD CUP 99       &  0.994$\pm$0.001      &  0.994$\pm$0.002  & 0.993$\pm$0.002  &  0.993$\pm$0.002 \\ \cline{2-6} 
                          \hline        
\end{tabular}
\caption{Experimental Results\label{experiments1}}
\vspace{-.2cm}
\end{table*}

The experiments of the classical ET are performed on a single node with the entire datasets. For the overall-performance evaluation, the distributed experiments are conducted on two client nodes and one master node, which is the minimum size of a parameter-server-based federated learning system. A dataset is randomly divided into several subsets and distributed to the clients to simulate the horizontal scenario. The sample space between the clients does not intersect. We also provide supplementary experiments to analyze the influence of the number of clients, the number of trees, and the maximum tree depth on the performance.

\subsection{Overall Performance}
The overall performance is shown in Table \ref{experiments1}. Each experiment was repeated for 30 times, and the mean and variance of the accuracy and F1 score are given. We can see these points from the Table \ref{experiments1}:
\begin{itemize}[leftmargin=*]


\item \textbf{Accuracy:} On different scales of datasets, our algorithms have achieved comparable results to the non-federated Extra-Trees. For both binary and multiclass classification tasks, our FET method performs even better than the basic ET model on some datasets. 


\item \textbf{Utility of privacy-preserving methods:} As observed in most tests, DP-based methods have brought a loss to the accuracy, but overall this loss is acceptable. The performances of both FET-LDP and FET-GDP are close to that of FET. This is because the framework of Extra-Trees is inclusive and can accommodate the fluctuations caused by the perturbations.
The other reason is that the perturbed data is used to select features, not directly involved in numerical modeling.

\item \textbf{Stability: } The variance values show the stability of FET-series algorithms compared with ET. Our LDP-based approach presented relatively smaller variances than the GDP method and maintained almost the same stability as the privacy-free FET. The datasets we used have covered a wide range of data volume and feature types, which besides shows the adaptability of our algorithm.

\end{itemize}
To analyze the impacts of the number of clients and tree parameters on the models, we experimented with the algorithmic efficiency of the three federated algorithms with different parameters on two datasets: the binary-class Credit-card dataset and the multi-class Letter-recognition dataset. 
\subsection{Effect of Client Numbers}
In this set of experiments, the number of trees and the maximum tree depth were fixed to 20. We have randomly divided both datasets into 9 folds, and each was placed on one client. As is shown in Figure \ref{fig:depth} (a,d), when we have more clients modeling together, there is a constant increase in the accuracy. This has supported our vision that by uniting more institutions a better modeling performance could be achieved. LDP method was also applied to protect the data privacy of each client.



\subsection{Effect of Tree Settings}
We tested the effects of the number of trees and the maximum tree depth respectively. When experimenting with the effect on the number of trees, the maximum tree depth is set to 20, and vice versa.
By observing the change of the fold lines in Figure \ref{fig:depth}, we could find:
\begin{itemize}[leftmargin=*]
\item \textbf{The number of trees: }The accuracy rate has been greatly improved from a single tree to multiple trees, which demonstrates the advantages of forest structure.
However, in the comparison of multiple trees, the increase in the tree numbers has minimal impact on the results.
\item \textbf{The maximum tree depth: }The maximum tree depth has a greater influence on the results. The accuracy of Federated Extra-Trees models is rising continuously as the tree depth threshold grows. When the maximum depth reaches 20 or so, the model converges. 


\end{itemize}
\begin{figure}[!t]
    \centering
    \includegraphics[width=1\linewidth]{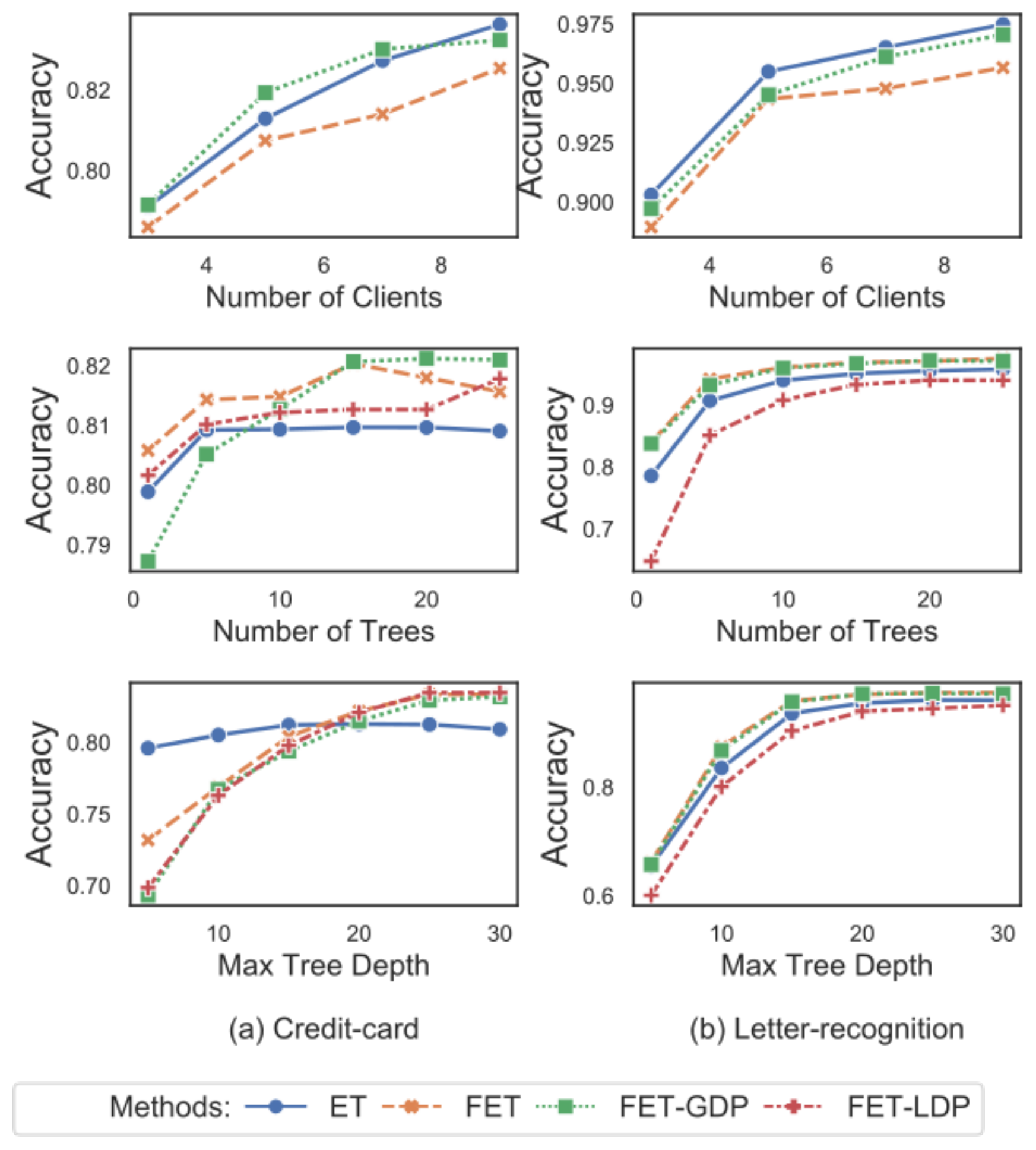}
    \caption{Experimental Results on Parameter Impacts}
    \label{fig:depth}
    \vspace{-.3cm}
\end{figure}
%
\section{Conclusions}

In this paper, we proposed a novel privacy-preserving federated machine learning method, called Federated Extra-Trees, which achieves competitive performance on the modeling accuracy and protects the data privacy. We also developed a secure multi-institutional federated learning system that allows the modeling task can be jointly processed across different clients with the same attribute sets but different user samples. The raw data on each client will never be exposed, and only a limited amount of intermediate modeling values were exchanged to reduce the communication and secure data privacy. The introduction of local differential privacy and a third-party trusted server strengthens privacy protection and makes it impossible to backdoor the actual statistical information from the clients. We set up multiple clients to simulate real-world situations and performed experiments on public datasets. The experimental results presented a superior performance for the classification tasks, and there was no loss on the modeling accuracy by comparing to the non-federated approach that requires data gathered in one place. We also proved that the introduction of local differential privacy does not affect the overall performance. The efficiency and robustness of our proposed system were also verified. To summarize, the Federated Extra-Trees successfully solved the \textit{data island} problem and provided a brand new approach to protect the data privacy while realizing the cross-institutional collaborative machine learning, and it is strong practical for real-world applications.


\small

\end{document}